\setlist[itemize]{leftmargin=*}
\setlist[enumerate]{leftmargin=*}
\def \1{\mathbf{1}}
\def \0{\mathbf{0}}
\def \R{\mathbb R}
\def \hD{\widehat{D}}
\def \b1{\boldsymbol{1}}
\def \T{{\cal T}}
\def \S{{\cal S}}
\newtheorem{theorem}{Theorem}[section]
\newtheorem{corollary}{Corollary}[theorem]
\newtheorem{lemma}[theorem]{Lemma}
\newcommand{\supp}{\text{supp}}
\newcommand{\removelatexerror}{\let\@latex@error\@gobble}
\title{\LARGE \bf
If it ain't broke, don't fix it: Sparse metric repair*
}
\author{Anna C.~Gilbert$^{1}$ and Lalit Jain$^{2}$
\thanks{*ACG was supported by National Science Foundation grants CCF-1161233 and CIF-0910765.}
\thanks{$^{1}$Anna C.~Gilbert is with the Department of Mathematics,
        University of Michigan, Ann Arbor, MI
        {\tt\small annacg@umich.edu}}%
\thanks{$^{2}$Lalit Jain is with the Department of Mathematics,
        University of Michigan, Ann Arbor, MI
        {\tt\small lalitj@umich.edu}}%
}
\begin{document}

\maketitle
\thispagestyle{empty}
\pagestyle{empty}

\begin{abstract}

Many modern data-intensive computational problems either require, or benefit from distance or similarity data that adhere to a metric. The algorithms run faster or have better performance guarantees. Unfortunately, in real applications, the data are messy and values are noisy. The distances between the data points are far from satisfying a metric. Indeed, there are a number of different algorithms for finding the closest set of distances to the given ones that also satisfy a metric (sometimes with the extra condition of being Euclidean). These algorithms can have unintended consequences; they can change a large number of the original data points, and alter many other features of the data. 

The goal of \emph{sparse metric repair} is to make as few changes as possible to the original data set or underlying distances so as to ensure the resulting distances satisfy the properties of a metric. In other words, we seek to minimize the sparsity (or the $\ell_0$ ``norm'') of the changes we make to the distances subject to the new distances satisfying a metric. We give three different combinatorial algorithms to repair a metric sparsely. In one setting the algorithm is guaranteed to return the sparsest solution and in the other settings, the algorithms repair the metric. Without prior information, the algorithms run in time proportional to the cube of the number of input data points and, with prior information we can reduce the running time considerably. 

\end{abstract}

\section{INTRODUCTION}

For many data analysis and processing tasks, we assume that the data upon which we run the various algorithms are points in a metric space; that is, the distances between the data points satisfy the properties of a (semi-)metric. We do not necessarily assume that the distances are Euclidean, nor that we have distinct points, but many of these algorithms do require the distances to satisfy the triangle inequality. In machine learning, for example, learning the metric from which a set of points is drawn is critical for tasks such as clustering~\cite{Xing2002}. For kernelized learning algorithms, we learn the kernel (rather than the more general metric) by learning Euclidean distances~\cite{chen2009learning}. For clustering data, ensuring that all the distances satisfy the triangle inequality constraints guarantees the algorithms run efficiently and produce correct clusters. Indeed, many approximation algorithms for standard theoretical computer science tasks have better approximation guarantees and highly efficient solutions (e.g., sublinear in the input size) when the input is from a metric space. See Indyk's survey of sublinear approximation algorithms~\cite{Indyk:1999:STA:301250.301366}  for examples such as the Traveling Salesman Problem, Maximum Spanning Tree, and $k$-median. Finally, in DNA sequence alignment problems, we build amino acid substitution matrices~\cite{anfinsen2005making} (known as Point Accepted Mutation and Block Substitution Matrices) to speed up pattern matching and alignment tasks, under the assumption that the matrices represent distances (from a metric) amongst amino acids.

Unfortunately, data are messy, physical measurements are noisy, values are missing, and data sets are far from perfect metrics. Before we can employ any of these efficient data analysis algorithms for critical tasks in machine learning, bioinformatics, or graph analysis, we must \emph{repair the metric}. We must adjust the data points so as to ensure their distances satisfy the properties of a metric. In the psychometric literature, there is a long history of work on the implications of data that fail to satisfy a metric, as well as many algorithms to ``fix'' the data so that the distances are Euclidean. See~\cite{laub2007inducing} and the references therein (especially the work of Shepard from the 1960s).
For clustering, Baraty, et al.~\cite{baraty2011impact} discuss what impact the failure of distances to adhere to a metric has upon clustering algorithms and they use rectifiers to continuously modify all distances to obtain a metric. Brickell, et al.~\cite{Brickell} were the first to refer to this problem specifically and to define the metric nearness or repair problem precisely: given a set of points or, equivalently, a matrix of distances, find the closest (in an appropriate norm) matrix of distances that satisfies metric properties. They devise several algorithms based upon convex optimization, as well as several combinatorial algorithms, closely related to all pairs shortest path computations on undirected, weighted graphs. 

All of the previously developed algorithms can (and, in practice, do) adjust all of the distances in their repairs. Biswas and Jacobs~\cite{biswas2015efficient} observe, however, that in many applications, more than 99\% of the triangle inequalities are satisfied. In other words, these algorithms make unnecessary and potentially deleterious changes in the original data. While the algorithms may fix the metric, they may break or change in unforeseen ways other aspects of the data set. Which leads us to advise (colloquially), ``If it ain't broke, don't fix it.'' Our precise mathematical goal is to make as few changes as possible to the original data set or underlying distances so as to ensure the resulting distances satisfy the properties of a metric. In other words, we seek to minimize the sparsity (or the $\ell_0$ ``norm'') of the changes we make to the distances among data points subject to the new distances satisfying a metric. We refer to this problem as \emph{sparse metric repair}. Note that we do not insist that the repaired metric be Euclidean. It could be from a manifold or class based from the output of a classifier. Enforcing a triangle inequality is the least we can ask from similarity or distance data.

We formulate three different repair scenarios, decrease only metric repair, increase only metric repair, and the general case. Each is structurally different from the others and, as a result, have different algorithmic solutions and guarantees. The decrease only metric repair problem is closely related to the all pairs shortest path problem (first observed by Brickell, et al.~\cite{Brickell}) and we use these observations to formulate a combinatorial algorithm that solves the sparse metric repair problem in time that is cubic in the number of data points. We give similar results for the other two scenarios, although the other two are much more difficult to analyze (as we show). The common feature of all of our algorithms is that without prior knowledge of which triangles are broken or which triples of distances fail to satisfy the triangle inequality, the running time is cubic. With such knowledge, the algorithms run considerably faster as they fix only what is broken.

We begin the paper with a precise statement of the sparse metric repair problem and a discussion of several restricted settings in which to study the problem. In Section~\ref{sec:algos}, we present algorithmic results for the three settings, along with some probabilistic analysis of a random data model. We then analyze the experimental performance of our algorithms in Section~\ref{sec:expts} on a range of problem types. We also compare our combinatorial algorithms with those from a more standard convex relaxation approach. 

\section{Preliminaries and Notation}

\newcommand{\symn}{\text{Sym}_n(\R_{\geq 0})}
We denote the subspace of positive real symmetric $n\times n$ matrices by $\symn$. A matrix $D\in \symn$ is said to be \textit{metric}, or equivalently a \textit{distance matrix}, if
\begin{align*}
    &D_{ii} = 0 \text{ for }1\leq i\leq n \\
    &D_{ij} \leq D_{ik}+D_{jk} \text { for }1\leq i,j,k\leq n.
\end{align*} 
Note that our notion of metric is often known as a semi-metric. In particular, we do not insist that $D_{ij} = 0$ iff $i = j$.  

In some situations, we wish to indicate that one matrix is element-wise less than another matrix and we write $A \preceq B$ to signify that each element of $A$ is less than or equal to its corresponding element in $B$. For example, one of the conditions a distance matrix satisfies is $0 \preceq D$.

We will refer to the complete (undirected) weighted graph on $n$ vertices with edge weights given by $D$ as $K_n(D)$. Our triangles are labeled according to their associated triangle inequality. We say that the triangle inequality corresponding to \textit{triangle} $ijk$ for a distance matrix $D$ is $D_{ij} \leq D_{ik} + D_{jk}$. Note that triangle $ijk = jik$, but $ijk\neq ikj\neq jki$. In particular there are $n(n-1)(n-2)/2$ total triangles. We say that $D_{ij}$ occurs on the left hand side of the above triangle inequality, and $D_{ik}$ and $D_{jk}$ occur on the right. The triangle $ijk$ and the associated triangle inequality are used interchangeably. 

Finally, we let $\T(D)$ denote the set of all $ijk$ with $D_{ij} > D_{ik} +D_{jk}$ for a $D\in \symn$. We refer to $\T(D)$ as the set of \textit{broken triangles}.\footnote{In the literature on shortest path algorithms, these are referred to as ``negative triangles.''} 

\section{Sparse Metric Repair Problem Statement}
Given a perturbed or corrupted distance matrix, $D'\in \symn$, the goal of \textit{sparse metric repair} is to find a perturbation $P$ so that $D = D'+P$ is metric, $P$ is sparse as possible, and $P$ is contained in a specified set $S$. 
\begin{equation}
\begin{aligned}
  &\text{minimize} & & \|P\|_0\\
  &\text{subject to} & & D'+P \text{ is metric}\\   
  &\text{and} & & P\in S\subset \mathbb{R}^{n\times n}
\end{aligned}
\end{equation}
As we will see in the following, assumptions on the constraint set $\S$ play a large role in the types of algorithms, the complexity of the algorithms,  and the types of guarantees that we can give for metric repair. We consider three specific cases:

\noindent{\textbf{Decrease Only Metric Repair (DOMR)}}: In this case we assume that $P \preceq 0$, i.e. we are interested in only decreasing the distances of $D'$ to ensure metricity.

\noindent{\textbf{Increase Only Metric Repair (IOMR)}}: In this case we assume that $0 \preceq P$, i.e. we are interested in only increasing the distances of $D'$ to ensure metricity.

\noindent{\textbf{General Metric Repair}}: We do not place any restrictions on $P$.


In general, there are many ways to repair $D'$ to assure that $D'+P$ is a distance matrix. For example, as pointed out in the multidimensional scaling literature~\cite{GowerLegendre86}, if $c = \max_{i,j,k} D'_{ik} + D'_{jk} - D'_{ij}$ then $D'+ c$ is automatically a metric. Also, note that uniqueness of $P$ is not guaranteed. For example in a triangle with side lengths $1, 2, 7$, we can increase either of the two smaller sides, or decrease the larger side to make it metric. In the first example, the repair is far from sparse as it adds one value to all the distances and in the second example, there are two different repairs, one changes fewer distances than the other. Indeed, there are many cases in which we know that $D' = D + \Delta$ where $\Delta$ is sparse and so we seek to change or repair as few distances as possible so as to (approximately) recover $D$.

We note that this is a challenging problem. Even even if we know the support of $\Delta$ exactly, in general, we can not hope to recover $D$ exactly from $D'$. Indeed, there could be multiple repairs $P$ of the same sparsity level so that $D'-P$ is a distance matrix. 

 By the nature of the problem, the repair $P$ only needs to be supported on edges appearing in any triangle in $\T(D')$, or in any triangle involving a distance in $\T(D')$. For any pair $i,j$, there are roughly $2n$ triangle inequalities involving $i,j$, so the support of $P$ can be as large as $O(|\T(D')|n)$. Depending on the types of assumptions or restrictions on $P$, this bound can be a very loose upper bound.

In this paper, we focus on two main types of algorithms. In the next section, we describe algorithms for sparse DOMR, sparse IOMR, and the general case that are fundamentally combinatorial in nature and that aim to repair the metric directly. We also present convex relaxation methods based on minimizing the $\ell_1$ norm, a technique that is well-established in the sparse analysis literature. Finally, we discuss the case of random data and provide experiments contrasting combinatorial and convex approaches.


\section{Algorithms for Metric Repair} \label{sec:algos}
\subsection{Decrease Only Metric Repair (DOMR)} \label{sec:DOMR}
We begin by considering the case when $P\leq 0$, so $\S$ is the subspace of all positive symmetric matrices. This is also known as the \textit{Decrease Only Metric Repair} problem (DOMR). A few initial remarks are necessary. First, assume that triangle $ijk$ is broken; i.e. $D_{ij}' \geq D'_{ik}+D'_{jk}$. Decreasing $D'_{ik}$ or $D'_{jk}$ will not fix this triangle inequality. Hence, we know that any distances $D'_{i,j}$ that occur on the left hand side of a broken triangle inequality necessarily are in the support of $P$. 

Second, the decrease only metric repair problem is closely related to that of all pairs shortest path (APSP) on a graph and this relationship is easy to see. Because $D = D'+P$ is metric, by definition, every $D_{ij} \leq \min_{k} (D_{ik}+D_{jk})$. This has a natural interpretation in terms of shortest paths on a graph---the shortest path from $i$ to $j$ is given by the distance $D_{ij}$ from $i$ to $j$.  The relationship between DOMR and APSP was first observed by Brickell et al.~\cite{Brickell}. In fact they show that the problems are equivalent; a decrease only solution gives rise to an APSP solution and vice versa.

In particular, this implies that any algorithm for APSP can be reused for DOMR. There are many algorithms for solving APSP; to name a few, the Floyd-Warshall algorithm (presented in Algorithm  \ref{alg-DOMR}), repeated use of single source shortest path methods such as Dijkstra's algorithm, and recursive methods based on matrix multiplication in the tropical semiring (see Williams and Williams~\cite{WilliamsWilliams2010} for a full discussion of the relationship amongst these problems). Brickell, et al.~\cite{Brickell} also provide a primal-dual algorithm (similar to the method using Dijkstra's algorithm) for APSP. All of the above algorithms have a run time of $O(n^3)$ in the worst case. We will discuss the use of subcubic algorithms for APSP in broken triangle detection further in Section \ref{sec-brokentriangles}. 

\begin{algorithm}
    \SetAlgoLined
    \caption{Floyd-Warshall for DOMR}

\KwIn{Corrupted $n\times n$ distance matrix $D'$}
\KwResult{Perturbation $P$}
 $\widehat{D} = D'$\\
 \For{$k=1$ \KwTo $n$}{
    \For{$i=1$ \KwTo $n$}{
        \For{$j=1$ \KwTo $i-1$}{
            \lIf{$\widehat{D}_{ij}\geq \widehat{D}_{ik}+\widehat{D}_{kj}$}{
            $\widehat{D}_{ij} = \widehat{D}_{ik}+\widehat{D}_{kj}$
            }
        } 
    }
 }
$P = \widehat{D}-D'$
\label{alg-DOMR}
\end{algorithm}

To summarize Brickell, et al.'s observation:
\begin{lemma}(\textbf{Brickell, et al.~\cite{Brickell}})
\label{lem:APSP}
Let $\widehat{D}$ be any solution to DOMR; i.e. $\widehat{D}$ is element-wise less than $D'$. In addition, let $D^A$ be the all pairs shortest path solution for $K_n(D')$, the complete graph with edge weights given by $D'$. Then, $\hat{D}$ is element-wise smaller than $D^A$, $\hat{D} \preceq D^A$. 
\end{lemma}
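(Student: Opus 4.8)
The plan is to use the standard characterization of the all pairs shortest path matrix $D^A$ as a matrix of shortest path lengths in the complete weighted graph $K_n(D')$, and then exploit the fact that any DOMR solution $\widehat D$ is a metric, hence satisfies the triangle inequality. Recall that a solution to DOMR is precisely a $\widehat D \in \symn$ that is metric and satisfies $\widehat D \preceq D'$.

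First I would record that $D^A_{ij}$ equals $\min_{\pi} \sum_{(u,v) \in \pi} D'_{uv}$, where the minimum ranges over all paths $\pi$ from $i$ to $j$ in $K_n(D')$; this minimum is attained and finite because $D' \succeq 0$, so there are no negative-weight cycles and it suffices to consider simple paths. Next I would prove, by induction on the number $m$ of edges of a path $\pi = (i = v_0, v_1, \dots, v_m = j)$, the following claim: if $\widehat D$ is metric, then $\widehat D_{ij} \le \sum_{\ell=1}^m \widehat D_{v_{\ell-1} v_\ell}$. The base case $m = 1$ is immediate. For the inductive step, apply the triangle inequality $\widehat D_{v_0 v_m} \le \widehat D_{v_0 v_{m-1}} + \widehat D_{v_{m-1} v_m}$ together with the inductive hypothesis applied to the subpath $(v_0, \dots, v_{m-1})$, which has $m-1$ edges.

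Finally, since $\widehat D \preceq D'$ we have $\sum_{\ell=1}^m \widehat D_{v_{\ell-1} v_\ell} \le \sum_{\ell=1}^m D'_{v_{\ell-1} v_\ell}$ for every path $\pi$. Chaining this with the claim gives $\widehat D_{ij} \le \sum_{(u,v)\in\pi} D'_{uv}$ for every $i$-$j$ path $\pi$; minimizing the right-hand side over all such paths yields $\widehat D_{ij} \le D^A_{ij}$, and since $i,j$ were arbitrary this is exactly $\widehat D \preceq D^A$. I do not expect a genuine obstacle here: the only points requiring a word of care are that shortest paths are well defined (guaranteed by $D' \succeq 0$) and that restricting to simple paths loses nothing (a cycle contributes nonnegative weight and may be excised). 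The substantive step is the short induction showing that a metric's direct distance lower-bounds the length of every path between its endpoints, and that argument is routine.
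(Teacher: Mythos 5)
Your proof is correct. The paper itself does not prove this lemma---it is stated as a citation of Brickell et al.'s observation---but your argument is the standard one and is exactly what that proof consists of: characterize $D^A_{ij}$ as the minimum $D'$-length over $i$--$j$ paths, show by iterated triangle inequality that any metric $\widehat{D}$ satisfies $\widehat{D}_{ij} \le \sum_{\ell} \widehat{D}_{v_{\ell-1}v_\ell}$ along every path, and combine with $\widehat{D} \preceq D'$. You also correctly flag the one point the lemma's phrasing elides, namely that a DOMR solution must be metric in addition to being element-wise below $D'$; without that hypothesis the claim is false, and your proof uses it in exactly the right place.
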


This lemma has several immediate implications for finding sparse solutions. Let the perturbation associated to $D^A$ be $P^A := D^A - D'$. Consider a decomposition $D' = \widehat{D}+\Delta$ with $\widehat{D}$ metric and $\Delta\geq 0$. By Lemma~\ref{lem:APSP}, $\widehat{D}\preceq D^A\preceq D'$, so if $\Delta_{ij} = 0$, then $D_{ij}^A = D'_{ij}$ which implies $(P^A)_{ij} = 0$. In particular, this implies that $\supp(P^A)\subset \supp(\Delta)$. We summarize these observations for (sparse) DOMR in two corollaries. 
\begin{corollary}
The perturbation $P^A := D^A - D$ is a sparsest possible decrease only metric repair solution.
\label{cor:sparseAPSP}
\end{corollary}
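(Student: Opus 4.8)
The plan is to establish two facts: first, that $P^A$ is a \emph{feasible} DOMR solution, and second, that $\supp(P^A)$ is contained in the support of every feasible DOMR perturbation. Together these immediately give minimality of $\|P^A\|_0$ over all feasible $P$, which is exactly the claim.

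For feasibility, I would argue that $D^A$ is itself a distance matrix. Since $D' \in \symn$ has nonnegative entries, the complete weighted graph $K_n(D')$ has no negative-weight cycles, so the all pairs shortest path distances are well-defined, symmetric, and satisfy $D^A_{ii} = 0$; the inequality $D^A_{ij} \le D^A_{ik} + D^A_{kj}$ follows from the standard path-concatenation argument for shortest paths. Moreover $D^A \preceq D'$, because the single-edge path from $i$ to $j$ already has length $D'_{ij}$, so a shortest path is no longer. Hence $P^A = D^A - D' \preceq 0$ lies in $\S$, and $D' + P^A = D^A$ is metric, so $P^A$ is a feasible decrease only metric repair.

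For minimality, let $P \in \S$ be any feasible DOMR perturbation, and set $\widehat{D} := D' + P$ (metric, by feasibility) and $\Delta := -P \succeq 0$, so that $D' = \widehat{D} + \Delta$. By Lemma~\ref{lem:APSP}, $\widehat{D} \preceq D^A \preceq D'$. Whenever $\Delta_{ij} = 0$ we have $\widehat{D}_{ij} = D'_{ij}$, and the sandwich forces $D^A_{ij} = D'_{ij}$, i.e. $(P^A)_{ij} = 0$. Thus $\supp(P^A) \subseteq \supp(\Delta) = \supp(P)$, giving $\|P^A\|_0 \le \|P\|_0$; since $P$ was arbitrary, $P^A$ is a sparsest DOMR solution.

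The only nontrivial ingredient is Lemma~\ref{lem:APSP} itself, which is already available; the rest is the routine verification that the APSP distances form a metric dominated entrywise by the input $D'$. I would also note in passing that the asymmetric labeling of triangles (the excerpt distinguishes $ijk$ from $ikj$) causes no issue here, since the metricity of $D^A$ and the relations $\widehat{D} \preceq D^A \preceq D'$ are all symmetric in the unordered off-diagonal pair $\{i,j\}$.
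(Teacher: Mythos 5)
Your proposal is correct and follows essentially the same route as the paper: it invokes Lemma~\ref{lem:APSP} to get the sandwich $\widehat{D} \preceq D^A \preceq D'$ and concludes $\supp(P^A) \subseteq \supp(P)$ for every feasible repair, which is exactly the paper's argument preceding the corollary. The only addition is your explicit verification that $D^A$ is itself a feasible decrease-only repair, which the paper leaves implicit (and your reading of $P^A = D^A - D'$ correctly resolves the typo in the corollary's statement).
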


\begin{corollary}
The lemma also immediately implies that $D^A$ is the solution to
\[
    \text{argmin}_{D \preceq D'}\|D-D'\|_{p}
\] 
for any $\ell_p$ norm. In particular, if we set $p=1$, we see that $D^A$ is the minimal $\ell_1$ norm solution. 
\end{corollary}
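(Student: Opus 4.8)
The plan is to reduce the statement to an entrywise comparison supplied by Lemma~\ref{lem:APSP} together with the monotonicity of $\ell_p$ norms. Note first that the feasible set of the minimization is implicitly the set of \emph{metric} matrices $D$ with $D\preceq D'$ (otherwise $D=D'$ would trivially be optimal). So I would begin by checking that $D^A$ is itself feasible: since the edge weights of $K_n(D')$ are nonnegative, the all-pairs shortest-path matrix satisfies $D^A_{ii}=0$ and the triangle inequality, hence $D^A$ is metric; and since a shortest path is no longer than the single edge joining its endpoints, $D^A\preceq D'$.

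Next I would invoke Lemma~\ref{lem:APSP}: for every metric $D$ with $D\preceq D'$ we have the sandwich $D\preceq D^A\preceq D'$. Subtracting each term from $D'$ gives, for all $i,j$,
\[
0\le D'_{ij}-D^A_{ij}\le D'_{ij}-D_{ij},
\]
so $|D'_{ij}-D^A_{ij}|\le |D'_{ij}-D_{ij}|$ entrywise. Because $x\mapsto\|x\|_p$ is nondecreasing in each $|x_{ij}|$ for every $p\in[1,\infty]$, this entrywise domination yields $\|D^A-D'\|_p\le\|D-D'\|_p$ for all feasible $D$. Since $D^A$ is feasible, it is a minimizer; for $p<\infty$ the coordinatewise monotonicity is strict, so $D^A$ is the unique minimizer. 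Taking $p=1$ gives the stated $\ell_1$ claim, and the same entrywise domination also reproves Corollary~\ref{cor:sparseAPSP}, since $\supp(D^A-D')\subseteq\supp(D-D')$.

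There is essentially no hard step here; the only points requiring care are (i) making explicit that the feasible set is the metric matrices below $D'$, (ii) verifying that $D^A$ lies in that set (metricity of shortest-path distances, and $D^A\preceq D'$), and (iii) citing the correct monotonicity of the $\ell_p$ norm under entrywise domination. All of the substantive content is already packaged in Lemma~\ref{lem:APSP}.
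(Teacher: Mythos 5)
Your proof is correct and follows the same route the paper intends: Lemma~\ref{lem:APSP} gives the entrywise sandwich $D\preceq D^A\preceq D'$, hence $|D'_{ij}-D^A_{ij}|\le|D'_{ij}-D_{ij}|$ for every feasible $D$, and monotonicity of the $\ell_p$ norm under entrywise domination finishes the argument. Your added checks (feasibility of $D^A$ and uniqueness for $p<\infty$) are correct refinements but not a different method.
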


The algorithmic implication of Corollary~\ref{cor:sparseAPSP} is that the only distances that must be repaired (or, more accurately, adjusted) are those distances that were corrupted to begin with. In particular, this means that the only triangles that should be modified in the course of the algorithm are triangles that are already broken. The Floyd-Warshall algorithm presented in Algorithm \ref{alg-DOMR} is particularly amenable to including prior information about broken triangles or distances. In particular, we only have to consider triangles $ijk$ that are in $\T(D')$, or triangles that could potentially break if we decrease a distance $D_{ij}$; i.e. potentially breaking a triangle inequality of the form $D'_{il}\leq D'_{ij} + D'_{jl}$. (We refer to this extension as $\T'$ in the pseudo-code below.) In particular, this implies a $O(|\T(D')|n)$ algorithm for DOMR.  In general, if our perturbation $P$ is sparse, we expect that the number of involved triangles will be small as well. If we perturb $k$, distances, then we will have to consider $O(kn)$ triangles---far fewer from the $n^3$ required by the naive Floyd-Warshall algorithm. Our modified algorithm is given in Algorithm \ref{alg-DOMRPrior}.
\begin{algorithm}
\SetAlgoLined
\caption{Floyd-Warshall with Prior Information}

\KwIn{Corrupted $n\times n$ distance matrix $D'$, $\T(D')$ ordered lexicographically}
\KwResult{Perturbation $P$}
 $\widehat{D} = D'$\\
$\T'$ extension of $\T$; all $lij$ for $ijk\in \T$\\
\For{$t=ijk$ in $\T'$}{
   \lIf{$\widehat{D}_{ij} > \widehat{D}_{ik}+\widehat{D}_{kj}$}{
            $\widehat{D}_{ij} = \widehat{D}_{ik}+\widehat{D}_{kj}$
            }
 }
 $P =\widehat{D} - D'$ 
\label{alg-DOMRPrior}
\end{algorithm}

\begin{theorem}
Given a corrupted distance matrix $D'$ and a list $\T(D')$ of broken triangles, ordered lexicographically, Floyd-Warshall with Prior Information returns the sparsest repair $P$ in time $O(|\T(D')|n)$.
\end{theorem}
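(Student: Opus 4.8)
The plan is to split the statement into a running-time bound, which is a short counting argument, and a correctness claim, which is the substantive part and which I would reduce to the all-pairs-shortest-path characterisation already available from Lemma~\ref{lem:APSP} and Corollary~\ref{cor:sparseAPSP}.

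\emph{Running time.} The list $\T'$ consists of $\T(D')$ together with, for each broken triangle $ijk$ whose left-hand-side edge is $(i,j)$, the at most $2n$ triangles that carry $D_{ij}$ on their right-hand side. Hence $|\T'|\le (2n+1)\,|\T(D')| = O(|\T(D')|\,n)$. Each pass through the loop body costs $O(1)$, and since $\T(D')$ is supplied lexicographically, $\T'$ can itself be produced in lexicographic order within the same $O(|\T(D')|\,n)$ budget (generate the $O(n)$ extension triangles of each broken triangle and merge). This gives the $O(|\T(D')|\,n)$ bound.

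\emph{Correctness.} I would establish three facts about the returned matrix $\widehat D$. (i) $\widehat D\preceq D'$: every assignment only lowers an entry, so this follows by induction on the iterations, and consequently $P=\widehat D-D'$ is a feasible decrease-only perturbation. (ii) $\widehat D$ is metric. Here I would carry the invariant $D^A\preceq\widehat D$ through the whole run: it holds at the start because shortest-path distances never exceed the direct edge weights, and it is preserved by a relaxation of triangle $ijk$ since $\widehat D_{ik}+\widehat D_{kj}\ge D^A_{ik}+D^A_{kj}\ge D^A_{ij}$. Combined with (i) this gives $D^A\preceq\widehat D\preceq D'$ throughout, and to finish I would argue that at termination $\widehat D=D^A$, which is metric because it is a shortest-path matrix (equivalently a DOMR solution, as noted before Lemma~\ref{lem:APSP}). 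To show $\widehat D=D^A$ coordinatewise: on edges $(i,j)$ with $D^A_{ij}=D'_{ij}$ the value $\widehat D_{ij}=D'_{ij}$ is forced by (i) and the invariant; on edges with $D^A_{ij}<D'_{ij}$ I would exhibit a chain of relaxations lying inside $\T'$ and consistent with the traversal order that drives $\widehat D_{ij}$ down to the shortest-path value, mirroring the classical Floyd--Warshall induction that after the stage with intermediate vertex $k$ the entry $\widehat D_{ij}$ equals the length of the shortest $i$--$j$ walk through vertices $\le k$. (iii) Sparsity. By (i) and (ii), $P$ is a valid decrease-only repair; Corollary~\ref{cor:sparseAPSP} says $P^A=D^A-D'$ is a sparsest one; and since $\widehat D=D^A$ we have $P=P^A$, so $P$ is sparsest.

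The main obstacle is step (ii), and specifically the claim that the pruned list $\T'$, read once in the prescribed order, still supplies enough relaxations to recover $D^A$ on every coordinate. Two points need care. First, no triangle outside $\T'$ should be violated at termination; this is transparent when both right-hand-side edges of the triangle are never modified, since then $\widehat D_{ij}\le D'_{ij}\le D'_{ik}+D'_{jk}=\widehat D_{ik}+\widehat D_{kj}$, so one must check that every edge that is modified appears on the right-hand side of some triangle in $\T'$, i.e.\ that the cascade of forced decreases initiated by the broken triangles never leaves $\T'$. Second, the lexicographic order on $\T'$ must be made to play the role of the ascending-apex order under which the Floyd--Warshall recurrence is valid, so that when a triangle is processed its right-hand-side distances already carry their final values (or are corrected by a later triangle of $\T'$). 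I expect most of the effort to go into verifying that confining the relaxations to $\T'$ does not break that recurrence.
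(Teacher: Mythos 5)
Your plan follows the same route the paper takes: the sparsity claim is delegated to Lemma~\ref{lem:APSP} and Corollary~\ref{cor:sparseAPSP} (any decrease-only repair is dominated by the APSP matrix $D^A$, so $P^A=D^A-D'$ is sparsest), and the running time is a count of $|\T'|=O(|\T(D')|\,n)$. Note, however, that the paper states this theorem without a proof, and the step you flag as the ``main obstacle'' --- that a \emph{single} lexicographic pass over the one-level extension $\T'$ actually drives $\widehat D$ all the way down to $D^A$ --- is precisely what the paper leaves unargued as well; your proposal does not close it either. Two concrete worries there: the extension $\T'$ only adds triangles having an originally-broken left-hand edge on their right-hand side, so a cascade of forced decreases more than one level deep (an edge that must shrink only because a right-hand edge of its fixing triangle itself shrinks for an indirect reason) is not obviously captured; and the lexicographic order on $ijk$ does not coincide with the apex-major order that the classical Floyd--Warshall invariant relies on, so the ``relax once and you are done'' property needs a separate argument. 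Until those are verified (or a multi-pass/worklist variant is substituted), both your write-up and the paper's are proof sketches rather than proofs of this theorem.
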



\subsection{Increase Only Metric Repair} \label{sec:IOMR}
 Next, we consider the Increase Only Metric Repair problem (IOMR); that is, we assume a repair of the form $0 \preceq P$. Though superficially similar to DOMR, IOMR is significantly more difficult and far fewer guarantees can be given. 

Unlike the DOMR case, it is not possible to detect which distances were perturbed from a broken triangle inequality. Indeed, if $D_{ij} \geq D_{ik} + D_{jk}$, then either $D_{ik}$ or $D_{jk}$ could have been initially perturbed (decreasing $D_ij$ would not have broken the triangle inequality). In addition, it is easy to see by an explicit example that the $\ell_1$ minimizing IOMR solution does not give any sparsity guarantees. Consider our previous example: a triangle with side lengths $1, 2, 7$, we can increase either of the two smaller sides with a total increase of 4 or decrease the larger side by 4 to make it metric. There are infinite solutions with a total $\ell_1$ norm of 4 but only three solution which changes only one distance; i.e., only a few sparse solutions out of multiple solutions with minimal $\ell_1$ norm.

We are, however, encouraged by the success of Floyd-Warshall in the DOMR problem and we provide Algorithm~\ref{alg-IOMR} that updates $D_{ik}$ with $D_{ij} - D_{jk}$ whenever $ijk$ is broken.

\begin{algorithm}
\caption{Increase Only Metric Repair}

\KwIn{Corrupted $n\times n$ distance matrix $D'$}
\KwResult{Perturbation $P$}
 $\widehat{D} = D$\\
 \For{$k=1$ \KwTo $n$}{
    \For{$i=1$ \KwTo $n$}{
        \For{$j=1$ \KwTo $i-1$}{
            \If{$\widehat{D}_{ij} > \widehat{D}_{ik}+\widehat{D}_{kj}$}{
                $\widehat{D}_{ik} =  \widehat{D}_{ij}-\widehat{D}_{kj}$
            }
        } 
    }
 }
$P = \widehat{D} - D'$
\label{alg-IOMR}
\end{algorithm}

\begin{lemma}
Given a corrupted distance matrix $D'$, Algorithm \ref{alg-IOMR} for IOMR returns a positive perturbation $P$ so that $D'+P$ is metric in time $O(n^3)$.
\end{lemma}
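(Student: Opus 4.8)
The plan is to verify the two claims separately: (i) termination with $P \succeq 0$ and $D' + P$ metric, and (ii) the $O(n^3)$ running time. The running time is immediate: the algorithm is three nested loops over indices in $\{1,\dots,n\}$, with $O(1)$ work in the innermost body (one comparison and one assignment), so the total cost is $O(n^3)$. The substantive part is correctness, and this is where I would spend essentially all the effort.

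For positivity, I would argue that the stored matrix $\widehat D$ never decreases in any entry over the course of the algorithm. The only write is $\widehat D_{ik} \leftarrow \widehat D_{ij} - \widehat D_{kj}$, and it is executed precisely when $\widehat D_{ij} > \widehat D_{ik} + \widehat D_{kj}$, i.e. when $\widehat D_{ij} - \widehat D_{kj} > \widehat D_{ik}$; hence each update strictly increases the entry it touches. (One should also note the update preserves symmetry, since we should read it as updating both $\widehat D_{ik}$ and $\widehat D_{ki}$, and keeps $\widehat D_{ii} = 0$ since the update is only triggered for $i \ne j \ne k$ and leaves diagonal entries alone.) Since $\widehat D$ starts at $D'$ and only increases, $P = \widehat D - D' \succeq 0$.

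The heart of the matter is showing that after the full run, the resulting $\widehat D$ actually satisfies every triangle inequality — this is the step I expect to be the main obstacle, because, unlike Floyd–Warshall for DOMR where each update can only help previously-satisfied inequalities (distances only shrink), here each update \emph{increases} a distance, which can break triangle inequalities that were previously satisfied. So there is no simple monotone invariant of the Floyd–Warshall type. I would look for an invariant tied to the loop structure: after the iteration of the outer loop with a given value of $k$, I would try to show that for every pair $i,j$ one has $\widehat D_{ij} \ge \widehat D_{ik} + \widehat D_{kj}$ is \emph{not} violated — more precisely, that the ``pivot'' $k$ can no longer be the apex of a broken triangle $ijk$ with that $k$ on the right-hand side. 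The catch is that a later pivot $k'$ can re-break a triangle through $k$, so the clean Floyd–Warshall argument does not transfer; I would need to track how the innermost order ($i$ decreasing effectively, $j < i$) interacts with the update, possibly proving instead that on termination no triangle $ijk$ with $k$ the \emph{last} pivot touching that triple is broken, and then chase this back through the pivot order. An alternative, safer route — if a clean invariant proves elusive — is to show termination and then argue by contradiction: if $\widehat D$ at the end had a broken triangle $ijk$, examine the last write that affected $\widehat D_{ij}$, $\widehat D_{ik}$, or $\widehat D_{kj}$ and derive that the corresponding conditional would have fired again, contradicting that the algorithm has finished. This contradiction argument is the most robust skeleton, and I would flesh out exactly which of the three entries was last modified and at which $(k,i,j)$ iteration to close the loop.

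Finally I would remark that the output is genuinely a repair and not merely ``no worse'': combining the positivity of $P$ with the metric property of $D' + P$ gives exactly the IOMR feasibility claim, with no optimality asserted — consistent with the discussion preceding the algorithm that IOMR admits no good sparsity guarantee.
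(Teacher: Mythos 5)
The running-time and positivity parts of your proposal are fine, and you have correctly diagnosed the central difficulty: each update \emph{increases} an entry, so the monotone Floyd--Warshall invariant from the DOMR case does not transfer. But the diagnosis is where your argument stops. For the heart of the lemma --- that the final $\widehat D$ satisfies every triangle inequality --- you offer two candidate strategies (a loop invariant tied to the pivot order, or a ``last write'' contradiction argument) and explicitly defer carrying out either one (``I would need to track\ldots'', ``I would flesh out\ldots to close the loop''). Neither route is executed, so the claim that $D'+P$ is metric is not actually established. This is not a cosmetic omission: as you yourself note, a later pivot can re-break a triangle through an earlier one, so it is genuinely unclear a priori that either of your sketched strategies closes, and the lemma is exactly the assertion that it does.

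For comparison, the paper's proof proceeds by showing the updates are not destructive in a specific, checkable sense: fix the largest $j$ for which triangle $ijk$ is updated in iteration $k$ of the outer loop, and suppose the resulting assignment $\widehat D_{ik} = \widehat D_{ij} - \widehat D_{kj}$ breaks some triangle $ikl$ with $l<k$. A short chain of inequalities then forces triangle $ljk$ to be broken as well. A case analysis on the relative order of $i$, $j$, $l$ finishes: if $l<i$, then $ljk$ was already visited and repaired earlier in the same outer iteration and is unaffected by the write to $\widehat D_{ik}$, a contradiction; if $l>i>j$, then the algorithm's earlier fix $\widehat D_{lk} = \widehat D_{lj} - \widehat D_{jk}$ combines with the unbroken triangle $ijl$ to show $\widehat D_{il} + \widehat D_{lk} \geq \widehat D_{ik}$, so $ikl$ was not broken after all. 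Your ``last write'' skeleton could in principle be developed into something similar, but as written the proposal contains a plan rather than a proof of the statement's main content.
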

\begin{proof}
At each step of the inner loop, a triangle is fixed. It suffices to show that these steps are not destructive; i.e. after the end of iteration $k$ of the outer loop, by increasing $\hD_{ik}$, no triangle inequality $ikl$, for any triangle with $l < k$, breaks. If $l>k$, we will visit this triangle in the future and repair it.

Choose $j$ to be the largest index so that we update triangle $ijk$ and assume updating $\hD_{ik}$ breaks triangle $ikl$ as a result. At the end of the inner most loop,  $\hD_{ik} = \hD_{ij} - \hD_{kj}$. This implies,
\begin{align*}
   &\hD_{ik} > \hD_{il} + \hD_{lk}
   \Rightarrow \hD_{ij} - \hD_{kj} > \hD_{il} + \hD_{lk}\\
   &\Rightarrow \hD_{ij} > \hD_{il}+\hD_{lk}+\hD_{kj}.
\end{align*}
Since $l < k$, triangle $ijl$ is fixed, and so $\hD_{ij} \leq  \hD_{il}+\hD_{lj}$. Plugging this in as an upper bound for $\hD_{ij}$, we see that
\[
    \hD_{il}+\hD_{lj} \geq \hD_{ij} >\hD_{il}+\hD_{lk}+\hD_{kj},
\]
which implies $\hD_{lj} > \hD_{lk}+\hD_{kj}$. In particular, the above manipulation shows if $ikl$ is broken, then $ljk$ must be as well. 

Now, if $j<l<i$ or $l<j<i$, then we have already visited triangle $ljk$ (on this iteration of the outside loop) and fixed it. In addition, updating $\hD_{ik}$ will not affect triangle $ljk$, or its equivalent $jlk$ in the latter case. This implies that triangle $ikl$ cannot have been broken. So, we can assume that $l > i > j$, and the algorithm has fixed triangle $ljk$ by setting $\hD_{lk} = \hD_{lj} - \hD_{jk}$. In this case, 
\begin{align*}
     \hD_{il} + \hD_{lk} &=  \hD_{il} + \hD_{lj} - \hD_{jk} 
     \geq \hD_{ij} - \hD_{jk} = \hD_{ik} 
\end{align*}
The last inequality holds since $l<k$, $ijl$ is not broken. Thus, we repair triangle $ikl$. 

\end{proof}

In general, we do not guarantee that Algorithm \ref{alg-IOMR} provides the sparsest possible perturbation. Indeed, if we increase a distance $\hD_{ik}$ that was not initally perturbed, many triangles of the form $\hD_{ik} \leq \hD_{il} + \hD_{kl}$ could break as a result and we update many distances $\hD_{il}$. As demonstrated in Section~\ref{sec:expts}, this algorithm, nevertheless, tends to provide sparse solutions on average. As in Algorithm~\ref{alg-DOMR}, we can reduce the runtime from $O(n^3)$ to to $O(|\T(D')|n)$ by using prior information. We provide Algorithm~\ref{alg-IOMR} as a natural counterpoint to Algorithm~\ref{alg-DOMR} as it solves the increase only metric repair problem.   

Now we turn to a method for IOMR that utilizes prior information. Assume that $D'=D+\Delta$ and we have access to an oracle $Q$ that knows precisely the support of $\Delta$. In theory, we could use a linear program to find the exact values of $P$ using $Q$. Instead we use a different method that tracks bounds on the distances provided by the triangle inequalities. 

The oracle $Q$ is a 0-1 matrix with $Q_{ij} = 1$ if $i,j$ is in $P$ and is $0$ otherwise. Our goal is to find an IOMR solution $D' \preceq \hD$ so that $D'-D$ is supported only on entries where $Q_{ij} = 1$. Algorithm~\ref{alg-IOMROracle} solves this problem, by using $Q$ to turn the IOMR problem into a DOMR problem and then running an APSP routine. The algorithm first finds an upper bound $U_{ij}$ for each $D_{ij}$ by computing 
\[ 
    \min_{Q_{ik} = 0, Q_{jk} = 0}(D_{ik}+D_{jk}).
\]
It then returns the APSP matrix for $U$ via the Floyd-Warshall algorithm. Note that each distance is replaced with its upper bound, thus guaranteeing that we only increase the distances. Lemma \ref{lem-IOMROracle} proves the validity of this algorithm.

In general we cannot expect to know $Q$, but there are easy heuristic algorithms for constructing a close approximation. For example:
\begin{itemize}
    \item We can scan over all broken triangles and set $Q(i,j)=1$ for distances that occur the most on the right hand side of a triangle inequality. Alternatively, we can scan over all broken triangles and take a subset of distances, so that every triangle contains at least one of these distances. This is similar to the standard set-cover problem which has many fast heuristics.
    \item Finally, we can run an APSP method on $D'$ and look at the edges that shortest paths tend to be routed through the most. It is very reasonable to assume that such edges have been decreased.
\end{itemize}

We will demonstrate the efficacy of the first method in Section~\ref{sec:expts}. 


 


\begin{lemma}
\label{lem-IOMROracle}
Assume that the corrupted distance matrix is of the form $D' = D+P$, with $D$ metric and that we are given access to an oracle $Q$ where $Q_{ij} = 1$ iff $P_{ij} \neq 0$. In addition, assume that for each distance $ij$ with $Q_{ij} = 1$, we can find a $k$ so that $Q_{ik} = 0$ and $Q_{jk} = 0$;  i.e., each distance $ij$ is contained in at least one triangle $ijk$ that is not broken. Then Algorithm \ref{alg-IOMROracle} will return an IOMR solution supported only on entries in $P$ and does so in time $O(n^3)$. 
\end{lemma}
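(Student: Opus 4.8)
The plan is to show that the matrix $\hD$ returned by Algorithm~\ref{alg-IOMROracle} is sandwiched entrywise between the true metric $D$ and the auxiliary matrix $U$ built in its first phase, i.e.\ $D\preceq\hD\preceq U$, with the additional feature that $U$ (hence $\hD$) coincides with $D'$ on every uncorrupted entry. All three requirements in the conclusion — that $\hD$ is metric, that $\hD\succeq D'$ so the repair is genuinely increase-only, and that $\hD-D'$ is supported only on $\{(i,j):Q_{ij}=1\}$ — then drop out of this sandwich. Throughout I use that we are in the increase-only regime: the corruption only shrank distances, so $P\preceq\0$, hence $D'\preceq D$, and on an uncorrupted entry ($Q_{ij}=0$) one has $D'_{ij}=D_{ij}$.

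First I would record the structure of $U$: by construction $U_{ij}=D'_{ij}$ when $Q_{ij}=0$, and $U_{ij}=\min_{k:\,Q_{ik}=Q_{jk}=0}(D'_{ik}+D'_{jk})$ when $Q_{ij}=1$, the minimum being over a nonempty index set precisely by the standing hypothesis that every corrupted pair $ij$ lies in a triangle $ijk$ with $Q_{ik}=Q_{jk}=0$; so $U$ is well defined. The crux is the inequality $D\preceq U$. For an uncorrupted pair this is the equality $U_{ij}=D'_{ij}=D_{ij}$. For a corrupted pair $ij$, every admissible $k$ has $Q_{ik}=Q_{jk}=0$, so $D'_{ik}=D_{ik}$ and $D'_{jk}=D_{jk}$, and the triangle inequality for the metric $D$ gives $D'_{ik}+D'_{jk}=D_{ik}+D_{jk}\ge D_{ij}$; minimizing over $k$ yields $U_{ij}\ge D_{ij}$. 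Combined with $D'\preceq D$ this also gives $U\succeq D'$.

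Next I would invoke Brickell et al.'s Lemma~\ref{lem:APSP}. Since $D$ is metric and $D\preceq U$, the matrix $D$ is a feasible decrease-only repair of $U$, so Lemma~\ref{lem:APSP} tells us it is dominated by the all-pairs-shortest-path matrix of $K_n(U)$ — which is exactly the output $\hD$ of the Floyd-Warshall call in the second phase. Thus $D\preceq\hD$. Conversely a shortest path never exceeds the corresponding direct edge, so $\hD\preceq U$, giving the sandwich $D\preceq\hD\preceq U$. From it: $\hD$ is the APSP matrix of a complete weighted graph, hence satisfies every triangle inequality and is metric; on an uncorrupted pair, $D'_{ij}=D_{ij}\le\hD_{ij}\le U_{ij}=D'_{ij}$ forces $\hD_{ij}=D'_{ij}$, so $\hD$ and $D'$ differ only on $\{(i,j):Q_{ij}=1\}$; and on any pair $\hD_{ij}\ge D_{ij}\ge D'_{ij}$, so $\hD-D'\succeq\0$ is a legitimate IOMR perturbation. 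The running time is $O(n^3)$: forming $U$ costs at most $O(n)$ per entry, and Floyd-Warshall is $O(n^3)$.

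I expect the only real content to be the inequality $D\preceq U$ followed by the reduction to Lemma~\ref{lem:APSP}: the hypothesis on unbroken triangles is exactly what makes the defining minimum for each corrupted $U_{ij}$ finite, and once $D$ has been exhibited as a feasible decrease-only instance for $U$, Brickell's result supplies the lower bound $D\preceq\hD$ for free. The one subtlety to watch is that the uncorrupted entries of $U$ must be kept equal to $D'_{ij}$ rather than inflated to some a priori larger upper bound, since it is precisely the two-sided squeeze $D_{ij}\le\hD_{ij}\le U_{ij}=D'_{ij}$ that pins those entries down and keeps the APSP step from leaking changes onto uncorrupted distances.
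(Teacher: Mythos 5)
Your proof is correct and follows essentially the same route as the paper: establish $D' \preceq D \preceq U$ with $U_{ij}=D'_{ij}$ on unperturbed entries, then conclude that the APSP step cannot alter those entries. The only cosmetic difference is that you obtain the lower bound $D \preceq \hD$ by citing Lemma~\ref{lem:APSP}, whereas the paper argues directly that the edge $ij$ remains a shortest path in $K_n(U)$; these are the same observation.
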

\begin{proof}
It suffices to show that $\hat{U}_{ij} = D_{ij}$ whenever $Q_{ij} = 0$ by the end of the algorithm. We claim that after the Upper-Bound step and before the APSP step, $D'_{ij} \leq D_{ij} \leq U_{ij}$ for all $ij$; i.e. $U$ is an upper bound for $D$. Indeed, if $Q_{ik} = Q_{jk} = 0$, then $D'_{ik} = D_{ik}, D'_{jk} = D_{jk}$. And, by the algorithm, 
\[
    U_{ij} = \min_{k: Q_{ik} = Q_{jk} = 0}(D'_{ik} + D'_{jk}) = 
    \min_{k: Q_{ik} = Q_{jk} = 0}(D_{ik} + D_{jk}).
\] 
Since each term in the minimum is an upper bound for $D_{ij}$, by metricity of $D$, $U_{ij}$ is also an upper bound for $D_{ij}$. Let us focus on the case when $Q_{ij} = 0$, i.e. edge $ij$ has not been perturbed,  then $U_{ij} = D_{ij} = D'_{ij}$. In this case every triangle $ijk$ in $U$ is correct by the end of the upper bound step. That is,
\[
    U_{ij} =  D_{ij} \leq D_{ik}+D_{jk}\leq U_{ik} + U_{jk}.
\]

In particular, we see that this implies that the shortest path from $i$ to $j$ in $K_n(U)$ is the edge $ij$. Hence after the APSP step, $\hat{U}_{ij} = U_{ij} = D_{ij}$.




\end{proof}

\begin{algorithm}
\SetAlgoLined
\caption{Oracle Based Increase Only Metric Repair}

\KwIn{Corrupted $n\times n$ distance matrix $D'$, $Q$ oracle}
\KwResult{Perturbation $P$}
Let $U$ be an $n\times n$ symmetric matrix initialized to $\infty$\\
 // Upper-Bound step\\
 \For{$k=1$ \KwTo $n$}{
    \For{$i=1$ \KwTo $n$}{
        \For{$j=1$ \KwTo $i-1$}{
            \If{ $D'_{ij}\leq D'_{ik}+D'_{kj}$}{
                \If{$Q_{ik} = 0$ \textbf{ and } $Q_{jk} = 0$}{
                    $U_{ij} = \min(U_{ij}, D'_{ik}+D'_{jk})$
                }
            } 
        } 
    }
 }
 // APSP-step\\
$\widehat{U} = U$\\
 \For{$k=1$ \KwTo $n$}{
    \For{$i=1$ \KwTo $n$}{
        \For{$j=1$ \KwTo $i-1$}{
            \If{ $\widehat{U}_{ij}\geq \widehat{U}_{ik}+ \widehat{U}_{jk}$}{
                    $\widehat{U}_{ij} = \min(\widehat{U}_{ij}, \widehat{U}_{ik}+\widehat{U}_{jk})$
                }
            } 
        } 
    }
$P = \widehat{U} - D'$
\label{alg-IOMROracle}
\end{algorithm}

 As with all previous algorithms, if we know the set of broken triangles of $U$ a priori, we can accelerate Algorithm~\ref{alg-IOMROracle} to an $O(|Q|n)$ algorithm.

\subsection{General Metric Repair}

Using the ideas in the previous subsections, we present an algorithm for metric repair in the general case. Our algorithm combines the ideas of the update steps for both DOMR and IOMR with the idea of finding an oracle. Essentially, the algorithm first uses the counting heuristic described in Section~\ref{sec:IOMR} to build two dictionaries, one which tracks the number of times that each distance occurs on the right or the left in a broken triangle. This information is then used to decide what distance to fix in a broken triangle. Intuitively, if $D'_{ij}$ occurs on the left of a broken triangle inequality often, this implies that we should decrease it. 

\begin{algorithm}
\newcommand{\lft}{{\rm{l}}}
\newcommand{\rght}{{\rm{r}}}
\SetAlgoLined
\caption{Heuristic for General Metric Repair}

\KwIn{Corrupted $n\times n$ distance matrix $D'$}
\KwResult{Perturbation $P$}
 $\widehat{D} = D$\\
Let $\lft, \rght$ be $n\times n$ symmetric all-zero matrices.\\
 \For{$k=1$ \KwTo $n$}{
    \For{$i=1$ \KwTo $n$}{
        \For{$j=1$ \KwTo $i-1$}{
            \If{ $\widehat{D}_{ij} \geq \widehat{D}_{ik}+\widehat{D}_{kj}$}{
                $\lft_{ij} = \lft_{ij}+1$
                $\rght_{ik} = \rght_{ik}+1$
                $\rght_{jk} = \rght_{jk}+1$
            } 
        } 
    }
 }
 
 \For{$k=1$ \KwTo $n$}{
    \For{$i=1$ \KwTo $n$}{
        \For{$j=1$ \KwTo $i-1$}{
            \If{ $\widehat{D}_{ij} \geq \widehat{D}_{ik}+\widehat{D}_{kj}$}{
                \lIf{$\lft_{ij} > \max(\rght_{ik}, \rght_{jk})$}{
                    $\widehat{D}_{ij} =\widehat{D}_{ik}+\widehat{D}_{kj}$                 
                }\lElseIf{$\rght_{ik} > \rght_{jk}$}{
                    $\widehat{D}_{ik} = \widehat{D}_{ij} - \widehat{D}_{jk}$
                }\Else {
                    $\widehat{D}_{jk} = \widehat{D}_{ij} - \widehat{D}_{ik}$                        
                } 
        } 
    }
 }
 }
 \label{alg-General}
 \end{algorithm}

In general, the heuristic algorithm is not guaranteed to succeed; i.e., the output may not be a fixed distance matrix. In addition, no guarantee is given for finding the sparsest solution; the heuristic constraints, however, certainly encourages sparsity. Again, the algorithm can be modified to loop over all triangles in $\T(D')$ and triangles containing a distance from $\T'(D')$.


\subsection{Random data}
Algorithm~\ref{alg-General} applies in situations where we do not have any prior knowledge of the sign of our perturbation. Although it is a heuristic, it is instructive to consider distances that are drawn at random specified distributions and examine the extent to which these random matrices fail to adhere to metricity so that when we demonstrate how the heuristic performs on this type of data in Section~\ref{sec:expts}, we have an understanding of its expected performance.

We specifically consider the model where $D'$ is a random symmetric matrix with entries independently drawn from a distribution. 

\begin{lemma}\label{lem:random}
The expected proportion of broken triangles in $D'$ is $1/6$ and $1/4$ when the entries of $D'$ are drawn from a uniform distribution $\text{Unif}([0,1])$, or an exponential distribution $\text{Exp}(\lambda)$.
\end{lemma}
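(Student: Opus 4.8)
The plan is to reduce the statement to a single one-dimensional probability computation via linearity of expectation, and then evaluate that probability for each of the two distributions.

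First I would set up the bookkeeping. Let $N = n(n-1)(n-2)/2$ be the total number of triangles. Fix a triangle $ijk$; by the conventions in Section~2 it has $D'_{ij}$ on the left-hand side and $D'_{ik}, D'_{jk}$ on the right, and these are three \emph{distinct} off-diagonal entries of the symmetric matrix $D'$, hence three independent draws from the underlying distribution. Write them as $X$ (left) and $Y,Z$ (right). Then $ijk \in \T(D')$ exactly when $X > Y+Z$, and $p := \P(X > Y+Z)$ is the same for every triangle. Since $|\T(D')| = \sum_{ijk} \1[D'_{ij} > D'_{ik}+D'_{jk}]$, taking expectations gives $\E|\T(D')| = Np$, so the expected proportion of broken triangles equals $p$. (The broken-triangle events for triangles that share an edge are not independent, but linearity of expectation does not need independence; the diagonal entries $D'_{ii}$ are irrelevant since every triangle uses three distinct vertices.)

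Next I would compute $p$ in the uniform case. Conditioning on $X=x$ and using that for $Y,Z \sim \text{Unif}([0,1])$ i.i.d.\ the sum $Y+Z$ satisfies $\P(Y+Z \le x) = x^2/2$ for $x \in [0,1]$, we get
\[
    p = \int_0^1 \P(Y+Z \le x)\,dx = \int_0^1 \frac{x^2}{2}\,dx = \frac{1}{6}.
\]
For the exponential case the slickest route is the memorylessness of $\text{Exp}(\lambda)$: write $\P(X > Y+Z) = \P(X>Z)\,\P(X-Z > Y \mid X>Z)$; conditioned on $\{X>Z\}$ the overshoot $X-Z$ is again $\text{Exp}(\lambda)$ and independent of $Y \sim \text{Exp}(\lambda)$, so that conditional probability is $\tfrac12$, and $\P(X>Z) = \tfrac12$ by symmetry, whence $p = \tfrac14$. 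Equivalently, integrating the $\text{Gamma}(2,\lambda)$ density of $Y+Z$ against $\P(X>t) = e^{-\lambda t}$ gives $p = \int_0^\infty \lambda^2 t e^{-\lambda t} e^{-\lambda t}\,dt = \lambda^2/(2\lambda)^2 = 1/4$, which also makes the independence of the answer from $\lambda$ explicit.

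I do not anticipate a real obstacle: the only point requiring care is the first step, namely arguing that every triangle has a well-defined left-hand side and that its three edge weights are genuinely i.i.d.\ (true because distinct off-diagonal positions of $D'$ are independent). Everything after that is the two elementary integral/memorylessness computations above.
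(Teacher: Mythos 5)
Your proposal is correct, and in fact the paper states Lemma~\ref{lem:random} without any proof, so there is nothing to compare against: the reduction via linearity of expectation to the single probability $p=\P(X>Y+Z)$ for three i.i.d.\ edge weights, followed by the computations $\int_0^1 \frac{x^2}{2}\,dx=\frac16$ in the uniform case and $\E\bigl[e^{-\lambda(Y+Z)}\bigr]=\bigl(\tfrac12\bigr)^2=\tfrac14$ in the exponential case, is evidently the argument the authors had in mind. Your attention to the paper's convention that each unordered triple yields three oriented triangles (so the proportion is taken over $n(n-1)(n-2)/2$ inequalities, each broken with probability exactly $p$) is the only point where one could slip, and you handled it correctly.
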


In particular, this implies that if $n$ is sufficiently large, almost every distance will be involved in a broken triangle! Thus it seems unlikely to find an exceptionally sparse perturbation in all cases that will fix the metric. 

\subsection{Convex relaxation methods}
Finally, we finish with convex optimization methods (or, more colloquially, $\ell_1$ minimization). Using the standard paradigms from compressed sensingy, we relax the $\ell_0$ constraint on $P$ to a $\ell_1$ constraint:

\begin{equation}
\begin{aligned}
  &\text{minimize} & & \|P\|_1\\
  &\text{subject to} & & D'+P \text{ is metric}, P\in S\subset \mathbb{R}^{n\times n}
\end{aligned}
\label{alg-l1}
\end{equation}

We can also consider the metric constraints as polyhedral constraints on the set of symmetric $n\times n$ matrices (the resulting polyhedron is known as the metric cone, see for example \cite{deza2009encyclopedia}). So our sparse metric repair problem is asking for the sparsest $P$ so that $D'+P$ lies in this polyhedral cone. A commom technique used for finding the sparsest solution for a set of linear constraints is reweighted $\ell_1$ schemes \cite{needell2009noisy}. 

\begin{figure*}[t]
\centering
\includegraphics[trim={2cm 3cm 2cm 3cm},clip, width=.9\linewidth]{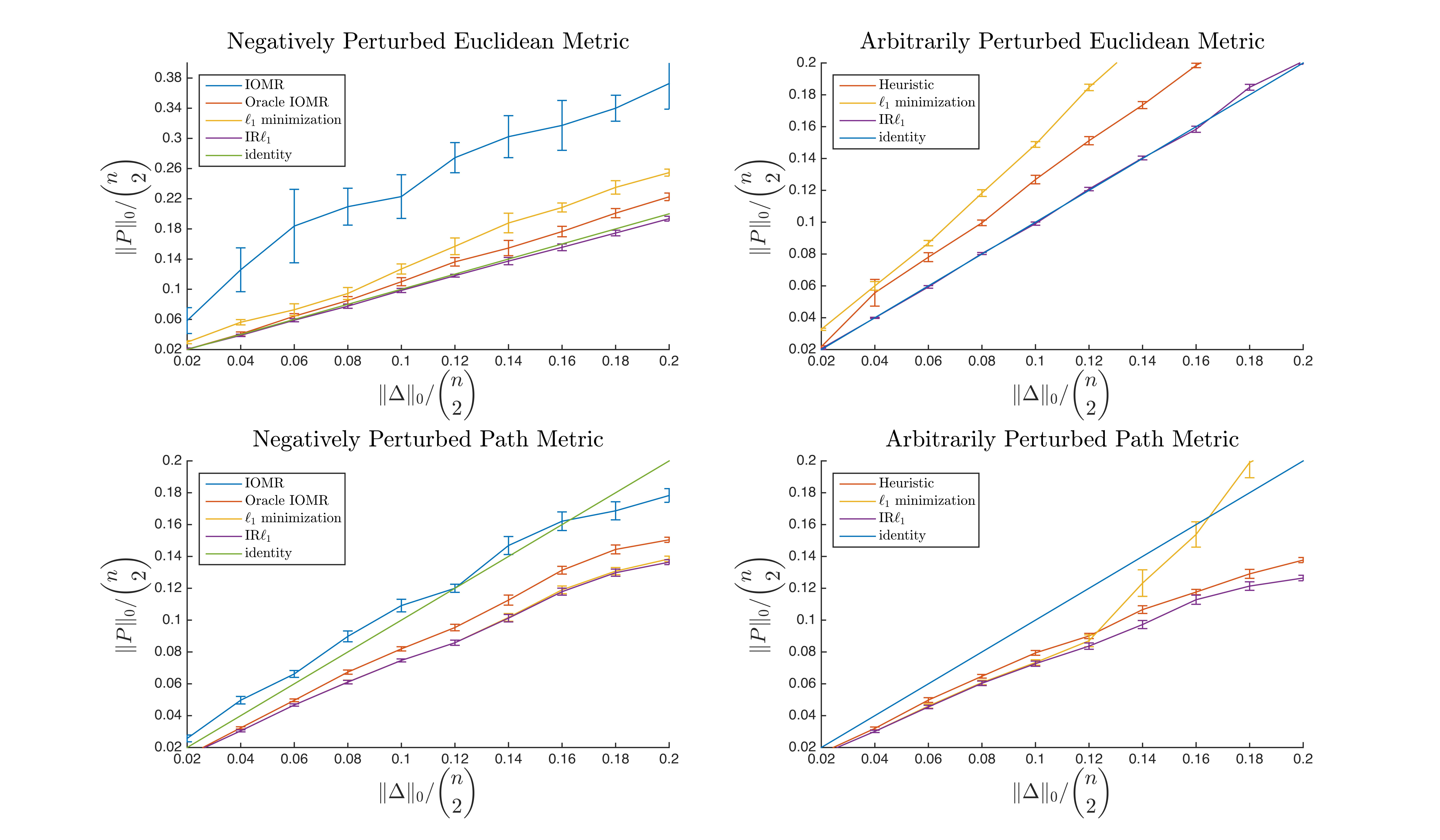}
\caption{Experiments on input vs output sparsity for the algorithms in this paper.}
\label{fig:sparsity}
\end{figure*}

\begin{algorithm}
\SetAlgoLined
\caption{Iteratively Reweighted $\ell_1$ minimization }

\KwIn{Corrupted $n\times n$ distance matrix $D$, iters, tolerance parameter $\epsilon$}
\KwResult{Fixed distance matrix $\widehat{D}$}

$w^1$,  $n\times n$ weight matrix with every entry initially 1.\\

\For{$t=1$ \KwTo \text{iters}} {
$P^t$ solution to following optimization problem
\begin{equation*}
\begin{aligned}
  &\text{minimize} & & \sum_{1\leq i,j\leq n}w_{ij}|P_{ij}|\\
  &\text{subject to} & & D'+P \text{ is metric}, P\in S\subset \mathbb{R}^{n\times n}
\end{aligned}
\end{equation*}

$w^{t+1}_{ij} = \frac{1}{P^{t-1}_{ij}+\epsilon}$
}
\label{alg-IRL1}
\end{algorithm}

In general reweighted schemes tend to find the sparsest solutions, however it can take many iterations for it to converge. Both algorithms implemented using standard solvers like CVX are computationally much more expensive compared to the discrete algorithms given above. In particular the best algorithms for checking that $D'+P$ is metric are essentially $O(n^3)$ (see below), so the convex algorithms have complexity at least as bad as the complexity of the discrete methods. 

\subsection{Identifying Broken Triangles}\label{sec-brokentriangles}
All of our algorithms have cubic run time and, with an input list of broken triangles, are much more efficient. To speed up these algorithms, it is sufficient to identify the broken (or negative) triangles efficiently. Indeed, this phenomenon is not a surprise as Williams and Williams~\cite{WilliamsWilliams2010} show that a number of problems are all equivalent under subcubic reductions, including negative triangle detection, APSP, and detecting metricity.  These reductions show that an $O(n^{3-\epsilon_1})$ algorithm for one problem would yield an $O(n^{3-\epsilon_2})$ algorithm for another.  As of 2017, there are, however, no truly subcubic algorithms for APSP. Chan and Williams~\cite{Chan2016} show that APSP on $n$-node weighted graphs can be solved in $O(n^{3/2^{\Omega(\sqrt{\log n})}})$ time deterministically. There are subcubic algorithms for graphs with integer weights and for unweighted graphs (for a comprehensive survey, see~\cite{madkour2017survey}). This \emph{might} apply in some, but not all, data applications. Finally, we note that Peres, et al.~\cite{Peres2010} show that on a random data model, APSP is a subcubic (even $O(n^2)$ algorithm) and we use this for inspiration in our experimental analysis.

\section{Experimental analysis}
\label{sec:expts}

Because the algorithms for DOMR are guaranteed to return the sparsest solution (see Section~\ref{sec:DOMR}), we focus on a comparison amongst our combinatorial algorithms, namely IOMR (Algorithm~\ref{alg-IOMR}), Oracle IOMR (Algorithm~\ref{alg-IOMROracle}), Heuristic for General Metric Repair (Algorithm~\ref{alg-General}) and the convex methods of $\ell_1$ minimization (i.e., the algorithmic formulation given by Equation~\ref{alg-l1}), and iteratively reweighted $\ell_1$ minimzation which we also denote by IR$\ell_1$(Algorithm~\ref{alg-IRL1}). 

In general, there are many different factors and models we can consider that impact the initial number of broken triangles and the resulting output sparsity. We could also analyze many aspects of the output, for example, the uniqueness of the sparsest perturbation, or how our solutions compare in magnitude. All of these cases are interesting to consider, but, for succinctness, we focus on sparsity of the output perturbation and the time it took the algorithm to run.

First, consider the case where we observe a broken metric $D'= D+\Delta$, with $D$ metric and $\Delta$ a sparse perturbation. We used the various algorithms to compute a perturbation $P$ so that $D'+P$ was again metric. In the top row of Figure~\ref{fig:sparsity}, $D$ is assumed to be the distance matrix of $50$ points in $\mathbb{R}^2$. The $x$-axis gives the sparsity of $\Delta$ as a fraction of the 1225 distances, and the $y$-axis is the sparsity level of $P$. In the left hand plot, $\Delta$ is assumed to be negative with values on the support uniformly chosen from $[-\|D\|_{\infty}/8, 0]$; on the right the values are uniformly chosen from $[-\|D\|_{\infty}/8, \|D\|_{\infty}/8]$. These values were chosen to be not too large, to assist finding $D$'s so that $0 \preceq D+\Delta$. In the bottom row, $D$ is the path metric of an Erd\"os-Renyi random graph on 50 vertices with $p = 2\log{50}/50$. Since the diameter of such Erd\"os-Renyi graphs is less than $3$ with high probability, the support of $\Delta$ was set to -1 in the negative perturbation case, and arbitrarily chosen from $\{-1,1\}$ in the arbitrary perturbation case. For each sparsity level, we averaged the sparsity of the output perturbation over ten trials. Finally, we also plot the the identity line as we expect that the sparsity of the output $P$ should not be too much greater than the sparsity of the input perturbation $\Delta$.

All algorithms were implemented in Matlab on a 2014 Macbook Pro running a 2.2 GHz Intel Core i7. We used the GNU Linear Programming Kit~\cite{GLPK} for the $\ell_1$ minimization based techinques. 

In all cases iteratively reweighted $\ell_1$ did the best, often finding a solution as sparse as the input perturbation or even sparser. When $\Delta\preceq 0$, IOMROracle tends to do almost as well as IR$\ell_1$, even matching it for low sparsity levels. In the Euclidean case, we see that IOMROracle and our Heuristic do significantly better than $\ell_1$. In the path case, $\ell_1$ minization and IR$\ell_1$ match for low sparsity levels, and both are slightly better than IOMROracle. We expect IOMR to be worse than IOMROracle since it is fixing distances in a predetermined order rather than a data-dependent fashion. It is surprising, however, that it does not seem to produce perturbations with sparsity worse than twice those of IOMROracle, so we conjecture that IOMR is potentially a 2-approximation algorithm for IOMR. For IOMROracle, we counted the broken triangles a distance occurred to decide whether we should fix it (similar to the method in the Heuristic algorithm). As a result, in several cases, IOMROracle and Heuristic both returned perturbations $P$ such that $D'+P$ had broken triangles. However, in each such case, at worst 20 triangles (around $1.6\%$ of all triangles) were broken---a significant improvement over the up to $60\%$ that were initially broken.  

\begin{figure}[h!]
\centering
\includegraphics[trim={.4cm .4cm .4cm .4cm},clip,width=.4\textwidth]{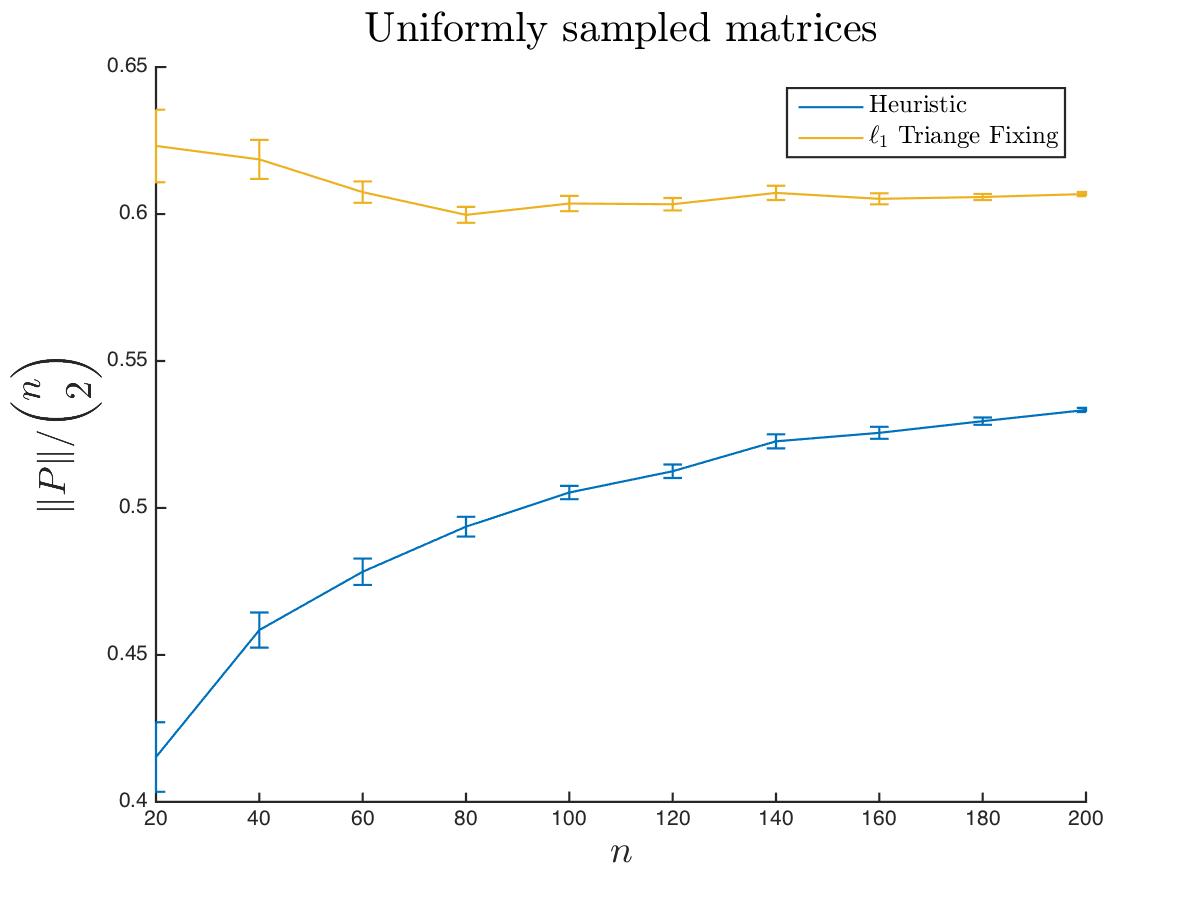}
\caption{Output sparsity of metrics with entries uniformly sampled from $[0,1]$.}
\label{fig:sparsity-uniform}
\end{figure}
The path metric case has several interesting attributes. Firstly, there is a strange divergence of $\ell_1$ minimization from IR$\ell_1$ in the arbitrary perturbation case for large sparsity cases. Secondly, in all the examples we ran, the both $\ell_1$ minimization and IR$\ell_1$ produced perturbations that were integer vectors! In addition the Heuristic almost always repaired the metric. The support of the solutions from the Heuristic and $\ell_1$ algorithms, differed from each other and $\Delta$.

We also considered the case when $D'$ was an arbitrary symmetric matrix whose entries were drawn uniformly from $[0,1]$. We compared the Heuristic method to the $\ell_1$ Triangle Fixing method given in \cite{Brickell}. Figure~\ref{fig:sparsity-uniform} shows the recovered sparsity as we vary $n$. As described in Lemma~\ref{lem:random}, we expect around $16\%$ of triangles to be broken, and almost all distances to be involved in a broken triangle. So, repairing or affecting only $50\%$ of distances is a significant improvement over adjusting all of them.

\begin{figure}[h!]
\centering
\includegraphics[trim={1.8cm .4cm 1.8cm .5cm},clip,width=2.5in]{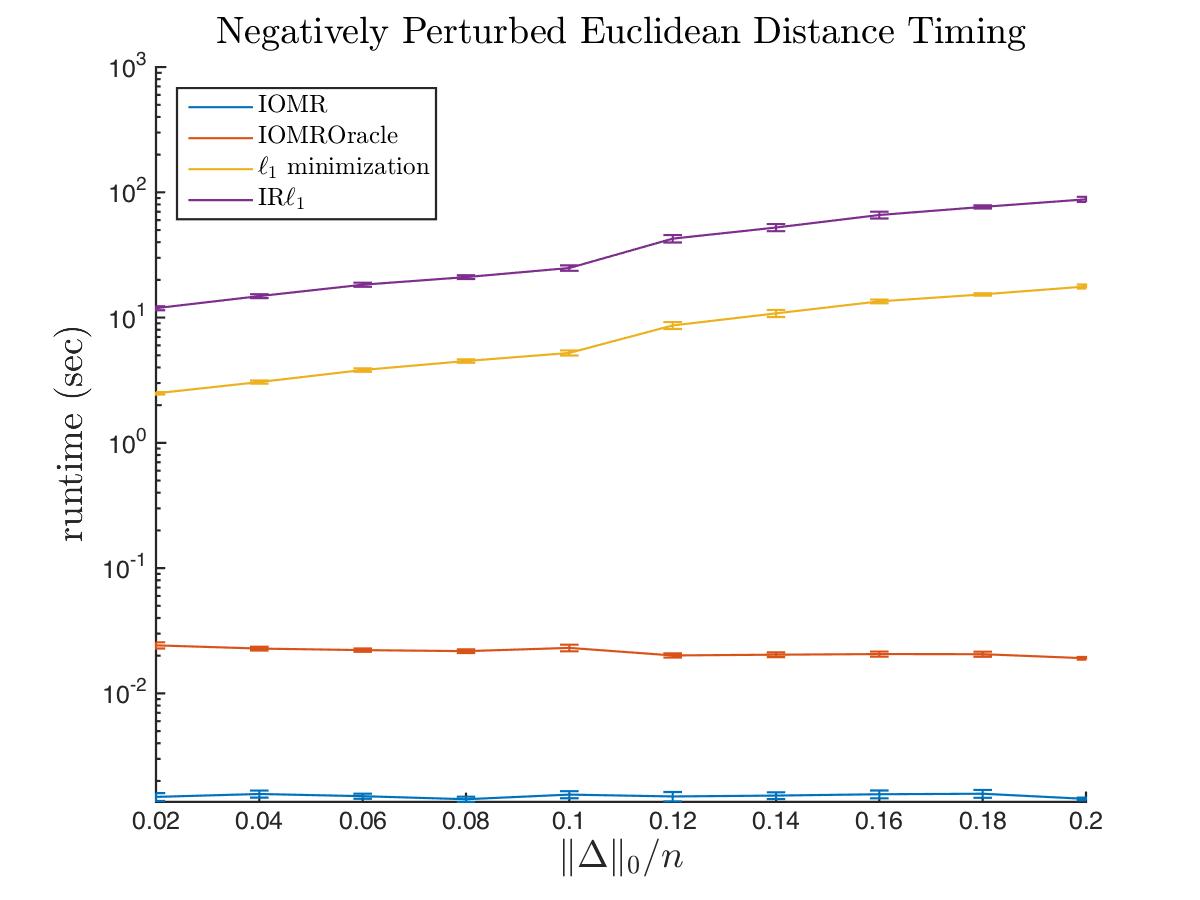}\\
\includegraphics[trim={1.8cm .5cm 1.8cm .3cm},clip,width=2.5in]{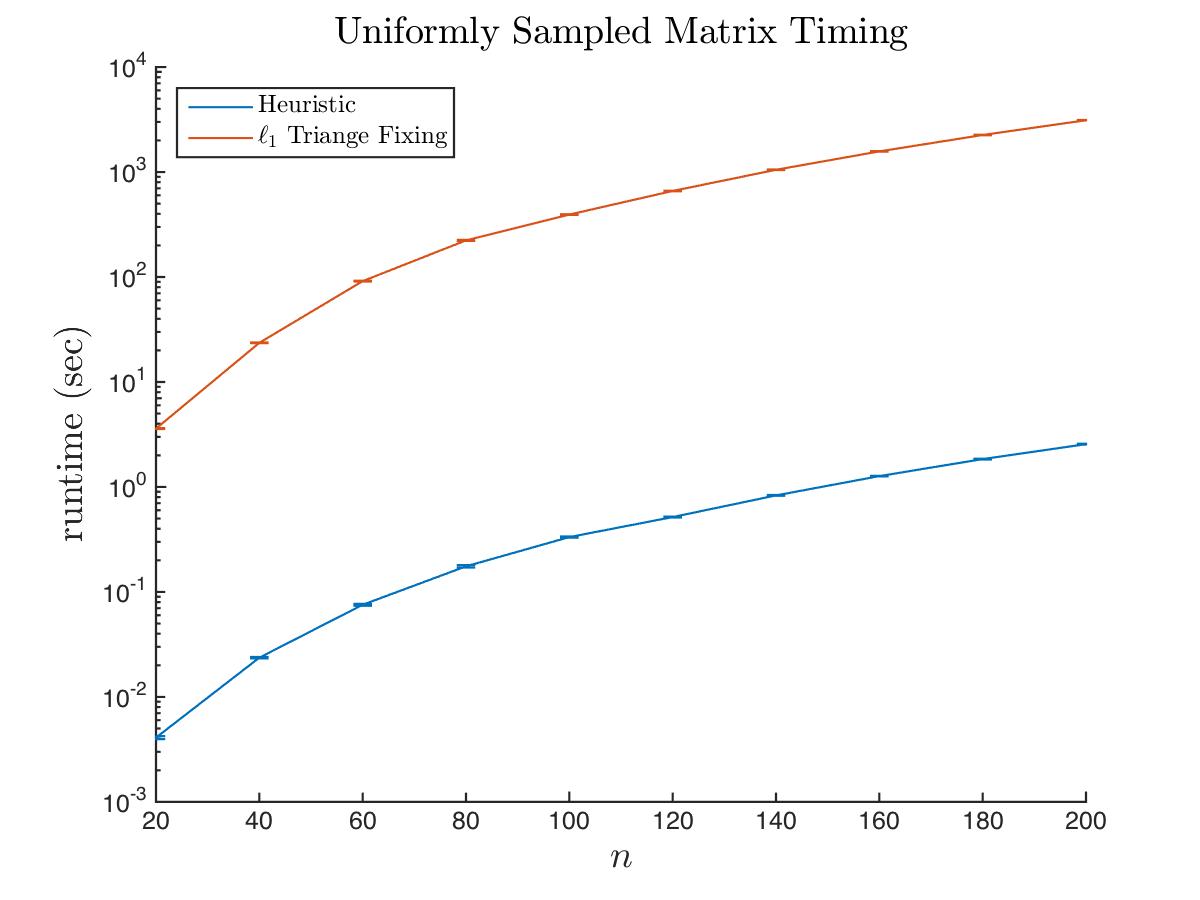}
\caption{Runtime comparison of the algorithms.}
\label{fig:runtime}
\end{figure}
The real advantage of the combinatorial algorithms over the convex methods is in their runtime; often several orders of magnitude. Figure~\ref{fig:runtime} shows the difference in runtime for the arbitrary perturbation Euclidean case. We also show the difference in timing for the Heuristic method vs Triangle Fixing. In general we found Triangle Fixing difficult to choose parameters for, and we also found that the number of passes over all triangles it required seemed to grow with $n$, unlike our Heuristic method which requires two. We realize that more specialized LP solvers could lead to large improvements and hope to address this in the future.


\section{Conclusions}
Our experiments show that the convex relaxation approaches to these problems have interesting empirical performance (depending on the input sparsity level and instance type). In addition, we can easily construct simple examples that have multiple (even infinite) solutions, all with the same minimal $\ell_1$ norm. These two observations suggest a deeper study of convex relaxation based methods. We emphasize that this is in contrast with the efficacy of convex relaxation for more traditional sparse problems. 

Many graph and network flow problems, such as Single Source Shortest Path, can be expressed as linear programs. Studying the dual of these programs has been fruitful in developing efficient combinatorial algorithms. In our work, we propose several different combinatorial algorithms and it would be interesting to ascertain what dual problem these algorithms solve. In particular it is likely that the IOMR algorithm is likely to correspond to the dual to a network flow problem extending the correspondence Brickell et al.~\cite{Brickell} observe for the DOMR case.

\addtolength{\textheight}{-12cm}   




\section*{ACKNOWLEDGMENT}

We would like to thank Sergey Fomin and Seth Pettie for several useful discussions about this problem and its connections to all pairs shortest path algorithms and tropical matrix multiplication.

\bibliographystyle{IEEEtran}
\bibliography{IEEEabrv,references}

\begin{thebibliography}{10}
\providecommand{\url}[1]{#1}
\csname url@rmstyle\endcsname
\providecommand{\newblock}{\relax}
\providecommand{\bibinfo}[2]{#2}
\providecommand\BIBentrySTDinterwordspacing{\spaceskip=0pt\relax}
\providecommand\BIBentryALTinterwordstretchfactor{4}
\providecommand\BIBentryALTinterwordspacing{\spaceskip=\fontdimen2\font plus
\BIBentryALTinterwordstretchfactor\fontdimen3\font minus
  \fontdimen4\font\relax}
\providecommand\BIBforeignlanguage[2]{{%
\expandafter\ifx\csname l@#1\endcsname\relax
\typeout{** WARNING: IEEEtran.bst: No hyphenation pattern has been}%
\typeout{** loaded for the language `#1'. Using the pattern for}%
\typeout{** the default language instead.}%
\else
\language=\csname l@#1\endcsname
\fi
#2}}

\bibitem{Xing2002}
E.~P. Xing, A.~Y. Ng, M.~I. Jordan, and S.~Russell, ``{Distance Metric Learning
  , with Application to Clustering with Side-Information},'' \emph{Advances in
  Neural Information Processing Systems}, vol.~15, pp. 505--512, 2002.

\bibitem{chen2009learning}
Y.~Chen, M.~R. Gupta, and B.~Recht, ``Learning kernels from indefinite
  similarities,'' in \emph{Proceedings of the 26th Annual International
  Conference on Machine Learning}.\hskip 1em plus 0.5em minus 0.4em\relax ACM,
  2009, pp. 145--152.

\bibitem{Indyk:1999:STA:301250.301366}
\BIBentryALTinterwordspacing
P.~Indyk, ``Sublinear time algorithms for metric space problems,'' in
  \emph{Proceedings of the Thirty-first Annual ACM Symposium on Theory of
  Computing}, ser. STOC '99.\hskip 1em plus 0.5em minus 0.4em\relax New York,
  NY, USA: ACM, 1999, pp. 428--434. [Online]. Available:
  \url{http://doi.acm.org/10.1145/301250.301366}
\BIBentrySTDinterwordspacing

\bibitem{anfinsen2005making}
J.~Anfinsen, ``Making substitution matrices metric,'' Master's thesis,
  Institutt for datateknikk og informasjonsvitenskap, 2005.

\bibitem{laub2007inducing}
J.~Laub, K.-R. M{\"u}ller, F.~A. Wichmann, and J.~H. Macke, ``Inducing metric
  violations in human similarity judgements,'' in \emph{Advances in neural
  information processing systems}, 2007, pp. 777--784.

\bibitem{baraty2011impact}
S.~Baraty, D.~Simovici, and C.~Zara, ``The impact of triangular inequality
  violations on medoid-based clustering,'' \emph{Foundations of Intelligent
  Systems}, pp. 280--289, 2011.

\bibitem{Brickell}
J.~Brickell, I.~S. Dhillon, S.~Sra, and J.~A. Tropp, ``The metric nearness
  problem,'' \emph{SIAM Journal on Matrix Analysis and Applications}, vol.~30,
  no.~1, pp. 375--396, 2008.

\bibitem{biswas2015efficient}
A.~Biswas and D.~W. Jacobs, ``An efficient algorithm for learning distances
  that obey the triangle inequality.'' in \emph{BMVC}, 2015, pp. 10--1.

\bibitem{GowerLegendre86}
J.~C. Gower and P.~Legendre, ``Metric and euclidean properties of dissimilarity
  coefficients,'' \emph{Journal of Classification}, no.~3, pp. 5--48, 1986.

\bibitem{WilliamsWilliams2010}
V.~V. Williams and R.~Williams, ``Subcubic equivalences between path, matrix
  and triangle problems,'' in \emph{2010 IEEE 51st Annual Symposium on
  Foundations of Computer Science}, Oct. 2010, pp. 645--654.

\bibitem{deza2009encyclopedia}
M.~M. Deza and E.~Deza, ``Encyclopedia of distances,'' in \emph{Encyclopedia of
  Distances}.\hskip 1em plus 0.5em minus 0.4em\relax Springer, 2009, pp.
  1--583.

\bibitem{needell2009noisy}
D.~Needell, ``Noisy signal recovery via iterative reweighted l1-minimization,''
  in \emph{Signals, Systems and Computers, 2009 Conference Record of the
  Forty-Third Asilomar Conference on}.\hskip 1em plus 0.5em minus 0.4em\relax
  IEEE, 2009, pp. 113--117.

\bibitem{Chan2016}
T.~M. Chan and R.~Williams, ``{Deterministic APSP, Orthogonal Vectors, and
  More: Quickly Derandomizing Razborov-Smolensky},'' \emph{SIAM ACM Symposium
  on Discrete Algorithms (SODA'16)}, pp. 1246--1255, 2016.

\bibitem{madkour2017survey}
A.~Madkour, W.~G. Aref, F.~U. Rehman, M.~A. Rahman, and S.~Basalamah, ``A
  survey of shortest-path algorithms,'' \emph{arXiv preprint arXiv:1705.02044},
  2017.

\bibitem{Peres2010}
Y.~Peres, D.~Sotnikov, B.~Sudakov, and U.~Zwick, ``{All-pairs shortest paths in
  $O(n^2)$ time with high probability},'' in \emph{Proceedings of the Annual
  IEEE Symposium on Foundations of Computer Science, FOCS}, 2010, pp. 663--672.

\bibitem{GLPK}
``Gnu linear programming kit,'' \url{https://www.gnu.org/software/glpk}.

\end{thebibliography}

\end{document}